\documentclass[10pt,twocolumn,letterpaper]{article}

\usepackage[T1]{fontenc}
\usepackage[margin=0.75in,columnsep=0.25in]{geometry}
\usepackage[hyphens]{url}
\usepackage{graphicx}
\usepackage{natbib}
\usepackage{caption}
\usepackage{amsmath,amssymb,amsfonts,amsthm}
\usepackage{booktabs}
\usepackage{multirow}
\usepackage{placeins}
\usepackage[colorlinks=true,linkcolor=blue,citecolor=blue,urlcolor=blue]{hyperref}
\hypersetup{pdftitle={Neither Precision Nor Architecture Alone: Controlled Tests of Failure Remedies for Physics-Informed Neural Networks},pdfauthor={Jinyuan Zhang, Peng He, Yin Yuan, He Hu, ShengShuo Jiao}}

\theoremstyle{plain}
\newtheorem{proposition}{Proposition}
\newtheorem*{proposition*}{Proposition}
\theoremstyle{remark}
\newtheorem{remark}{Remark}
\newtheorem*{remark*}{Remark}

\graphicspath{{figures/}}

\begin{document}

\twocolumn[{%
  \begin{center}
    {\Large\bfseries Neither Precision Nor Architecture Alone: Controlled Tests of Failure\\
     Remedies for Physics-Informed Neural Networks\par}
    \vspace{1.3em}
    {\normalsize
    \begin{tabular}{c@{\hspace{4em}}c}
      Jinyuan Zhang & Peng He$^{\ast}$\\
      {\ttfamily 202621116012480@stu.hubu.edu.cn} & {\ttfamily penghe@hubu.edu.cn}\\
      \addlinespace[3pt]
      He Hu & Yin Yuan\\
      {\ttfamily 202521120012751@stu.hubu.edu.cn} & {\ttfamily 202521120012766@stu.hubu.edu.cn}\\
      \addlinespace[3pt]
      ShengShuo Jiao & \\
      {\ttfamily 202621120012764@stu.hubu.edu.cn} & \\
    \end{tabular}\par}
    \vspace{1.1em}
    {\itshape Hubei University, Wuhan, China\par}
    \vspace{0.5em}
    {$^{\ast}$Corresponding author: {\ttfamily penghe@hubu.edu.cn}\par}
    \vspace{1.2em}
    \begin{minipage}{0.92\textwidth}
      \begin{abstract}
Physics-Informed Neural Networks (PINNs) frequently fail on stiff or advection-dominated PDEs, and two recent accounts offer competing remedies: switching from FP32 to FP64 to repair an L-BFGS stopping artifact, or replacing the MLP with a state-space-model (SSM) backbone plus sub-sequence alignment to counter architectural simplicity bias. We test both under matched, seed-paired controls in a pre-registered 144-run study spanning convection, reaction, and wave, plus an independent 85-run convection/wave study; success is relative $\ell_2$ error below $0.05$. The two remedies act on disjoint regime-and-seed slices: neither substitutes for the other. On hard convection ($\beta{=}50$), alignment recovers 2/5 seeds in FP32 and 3/5 in FP64, where the unaligned SSM succeeds on 0/5 seeds at either precision and the vanilla MLP moves only from 0/5 to 1/5 across the precision switch---the recoveries trace to the alignment objective, not the backbone. On reaction the backbone alone already succeeds on 3/5--4/5 seeds, so each remedy covers a regime the other does not. Responses are also seed-specific: the same precision switch flips individual seeds in opposite directions and, on wave, lowers median error with no statistically significant success gain. Tightening the inner L-BFGS tolerance in an independent repeated-step runner likewise lowers median error at a large runtime cost, with success counts unchanged. Precision, stopping, backbone, and alignment must therefore be evaluated jointly and reported per seed.
      \end{abstract}
    \end{minipage}
  \end{center}
  \vspace{2.2em}
}]

\section{Introduction}
\label{sec:intro}

Physics-Informed Neural Networks (PINNs)~\cite{raissi2019physics,karniadakis2021physics} embed PDE residuals into the training loss, enabling mesh-free solutions to forward and inverse problems, and have matured into a broad scientific-machine-learning toolkit supported by shared libraries~\cite{lu2021deepxde}. Despite their elegance, PINNs break down on stiff or advection-dominated PDEs---a failure pattern documented across convection, reaction, and wave equations at moderate-to-high parameter regimes~\cite{krishnapriyan2021characterizing}.

Two concurrent 2025 papers offer different diagnoses and cures:

\begin{itemize}
    \item \textbf{PINNMamba}~\cite{pinnmamba2025} attributes failures to \emph{continuous-discrete mismatch} and \emph{simplicity bias} of MLPs, and fixes them with a State Space Model (SSM) backbone plus sub-sequence contrastive alignment.
    \item \textbf{``FP64 is All You Need''}~\cite{fp64paper2025} attributes failures to an \emph{optimizer artifact}---L-BFGS's \texttt{tolerance\_change} sitting near the FP32 machine epsilon ($\sim$1.19e-7)---and fixes them by switching to FP64.
\end{itemize}

These explanations motivate different remedies, but their sufficiency and transferability have not been tested together. If FP64 alone transfers across regimes, it should rescue failures without modifying the model; if the architecture/alignment remedy is sufficient, precision should add little once that protocol is fixed. No existing study tests both interventions within the same controlled slices. Figure~\ref{fig:motivation} separates the two hypotheses and specifies the matched comparison required to test them jointly.

\begin{figure*}[t]
\centering
\includegraphics[width=0.99\textwidth]{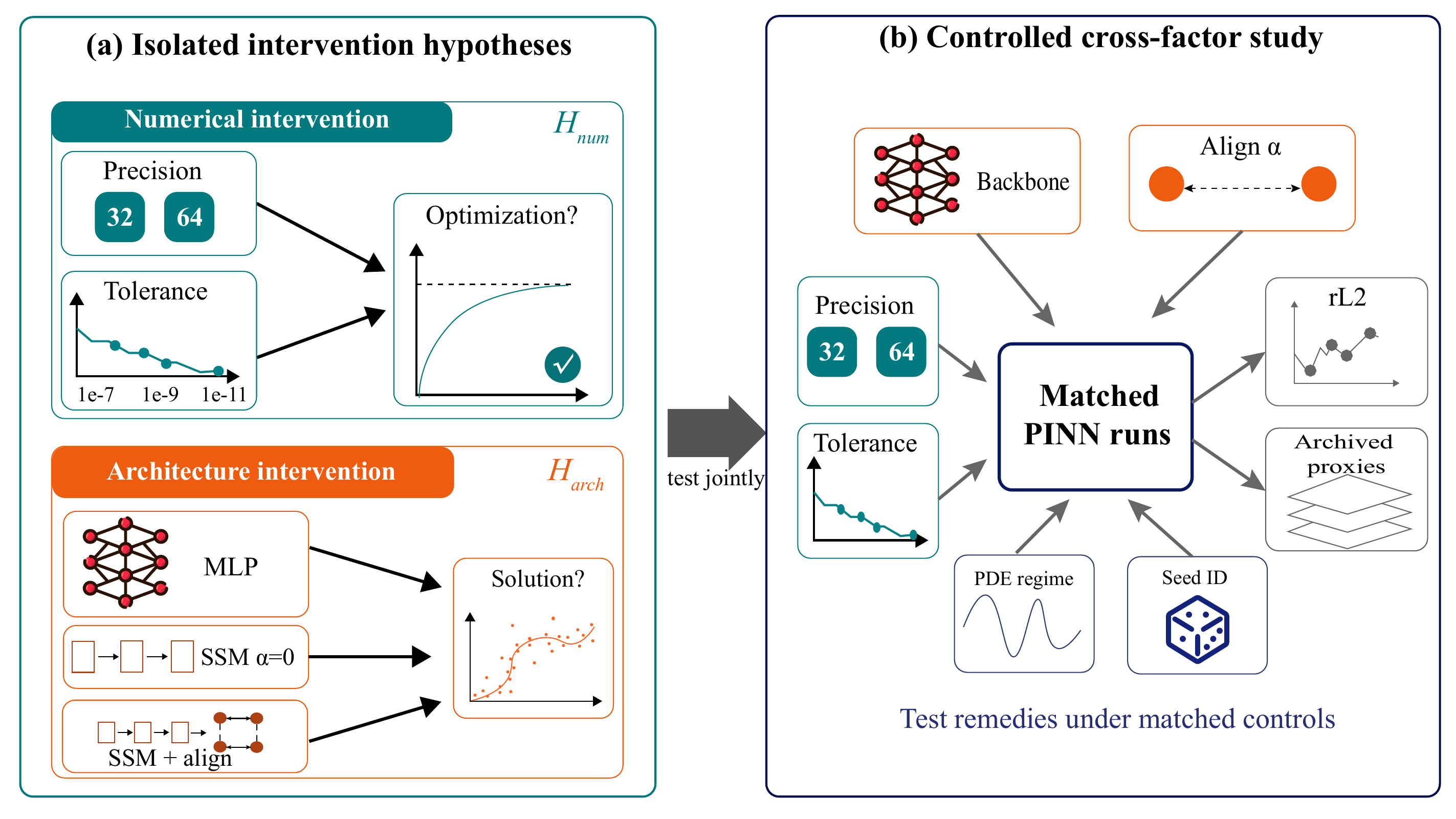}
\caption{Why the two remedies require a joint test. Prior studies isolate numerical precision/stopping from architecture/alignment, whereas our matched comparisons vary precision, inner L-BFGS tolerance, backbone, and alignment weight while controlling the PDE regime and seed. rL2 and archived diagnostic proxies appear as descriptive summaries.}
\label{fig:motivation}
\end{figure*}

The tests come in two complementary studies, each changing one factor at a time. The local controlled study pools 144 runs across preliminary experiments, a pre-registered evaluation, tolerance ablations, protocol comparisons, and matched follow-ups; the independent validation study adds 85 convection and wave runs for the alignment-weight, tolerance, and seed analyses (a wave configuration whose coefficients depart from the public protocol is set aside). Every comparison is matched---a PDE--seed cell shares its collocation grid, budget, and optimizer across variants---and reported per seed, since the remedy that rescues one initialization may leave another failing.

\paragraph{Contributions.}
\begin{enumerate}
    \item A controlled comparison of precision, inner L-BFGS tolerance, backbone, and alignment within matched PDE and seed slices, with the effective sample size reported for each comparison.
    \item Evidence that numerical and architectural interventions act on disjoint regimes and seeds: the alignment-weight sweep exhibits a threshold response, whereas tighter tolerance alters error and runtime but not success counts.
    \item A quantification of seed and threshold sensitivity, together with the scope each conclusion supports.
\end{enumerate}

\section{Related Work}
\label{sec:related}

\paragraph{Diagnosing PINN failure modes.} PINNs fail systematically on stiff or advection-dominated PDEs, and the literature offers several diagnoses that need not exclude each other. \citet{krishnapriyan2021characterizing} documented failures on convection, reaction, and reaction-diffusion equations and attributed them to complex loss landscapes; \citet{wang2021understanding} localized the pathology in the gradient imbalance between residual and boundary terms (cf.\ Eq.~\ref{eq:gradpath}); \citet{wang2022ntk} traced it to the eigenspectrum of the Neural Tangent Kernel; and \citet{wang2022causality} to training schedules that violate temporal causality. Benchmarks and shared libraries have since standardized how such failures are produced and measured~\cite{hao2024pinnacle,lu2021deepxde}. What these accounts share is a design, not only a conclusion: each varies one factor against an otherwise fixed protocol, so which mechanism binds in which regime---and whether a remedy for one transfers to another---has not been tested under matched controls. That joint test is the one our study runs.

\paragraph{Two remedies proposed in 2025.} Two concurrent works propose cures from opposite ends of the stack. On the architecture side, sequential inductive bias has been added through temporal Transformer attention~\cite{pinnsformer2024,vaswani2017attention} and, more recently, through selective state-space (Mamba) blocks~\cite{pinnmamba2025,gu2023mamba}, the latter paired with a sub-sequence contrastive alignment loss and state-of-the-art results on convection-$\beta{=}50$ and reaction-$\rho{=}5$; a sibling line instead swaps the MLP's activation functions to counter spectral bias~\cite{sitzmann2020siren}.
% TODO-cite: other SSM/Mamba applications in scientific computing (see report: Mamba-ND; Zheng et al. 2024 / Hu et al. 2024 as cited within PINNMamba for SSM-based neural operators).
On the optimizer side, \citet{fp64paper2025} attribute the same benchmark failures to L-BFGS's \texttt{tolerance\_change} sitting near the FP32 machine epsilon and report that FP64 arithmetic resolves them. The two papers evaluate overlapping failure cases but under different protocols, report one seed per configuration, and leave their compound remedies unablated---PINNMamba never separates the alignment loss from the SSM backbone, and the FP64 study never tightens the tolerance within FP32. We therefore place both remedies inside shared PDE--seed cells and vary one factor at a time.

\paragraph{Loss weighting and training schedules.} A third family changes neither architecture nor arithmetic but reshapes the objective: gradient-based adaptive balancing~\cite{wang2021understanding}, NTK-based weights~\cite{wang2022ntk}, co-trained self-adaptive masks~\cite{mcclenny2023selfadaptive}, and causal temporal weighting~\cite{wang2022causality}, alongside curriculum schedules that march the training window forward in time~\cite{krishnapriyan2021characterizing}.
% TODO-cite: loss re-weighting / adaptive weighting beyond the above (see report: van der Meer et al. 2022; Xiang et al. 2022; Bischof & Kraus 2021; Maddu et al. 2022).
% TODO-cite: curriculum & causal training (see report: Wight & Zhao 2020; Daw et al. 2023).
Read this way, sub-sequence alignment belongs to the same family: it adds an agreement term to the objective rather than a new optimizer or a new number format. Our protocol holds the rest of this family fixed and sweeps the alignment term by weight, so its contribution separates from the backbone that carries it.

\paragraph{The optimizer and precision view.} L-BFGS~\cite{liu1989lbfgs} is the de facto second-order optimizer in PINN training, and its convergence tests cannot resolve progress below the roundoff of the working arithmetic~\cite{higham2002accuracy}. Loss-landscape analysis pushes toward the same conclusion from the other side: PINN objectives are ill-conditioned enough to defeat first-order methods, which is why second-order optimizers dominate practice~\cite{rathore2024challenges}.
% TODO-cite: optimizer-focused studies of L-BFGS in PINN training (see report: Markidis 2021; Taylor et al. 2025; Basir & Senocak 2022).
The FP64 remedy rests on exactly this foundation: if the stopping floor is set by arithmetic, raising precision lowers the floor. What the foundation does not predict is whether the lowered floor turns failures into successes, in which regimes, and for which seeds; that part is empirical, and we answer it per seed.

\paragraph{Study design.} Prospective registration is standard in clinical trials but rare in machine learning, and reproducibility initiatives~\cite{pineau2021improving} motivate our frozen evaluation rules. Concurrent work maps optimizer effectiveness across regimes spanning PINNs, neural operators, and neural ODEs~\cite{wang2026multiregime}. Our contribution is narrower and complementary: precision, stopping, backbone, and alignment held under matched PINN controls, compared seed by seed, with the evaluation rules registered before the study expanded.

\section{Method}
\label{sec:method}

Figure~\ref{fig:framework} summarizes how a matched run is constructed, trained, and analyzed.

\begin{figure*}[t]
\centering
\includegraphics[width=0.86\textwidth]{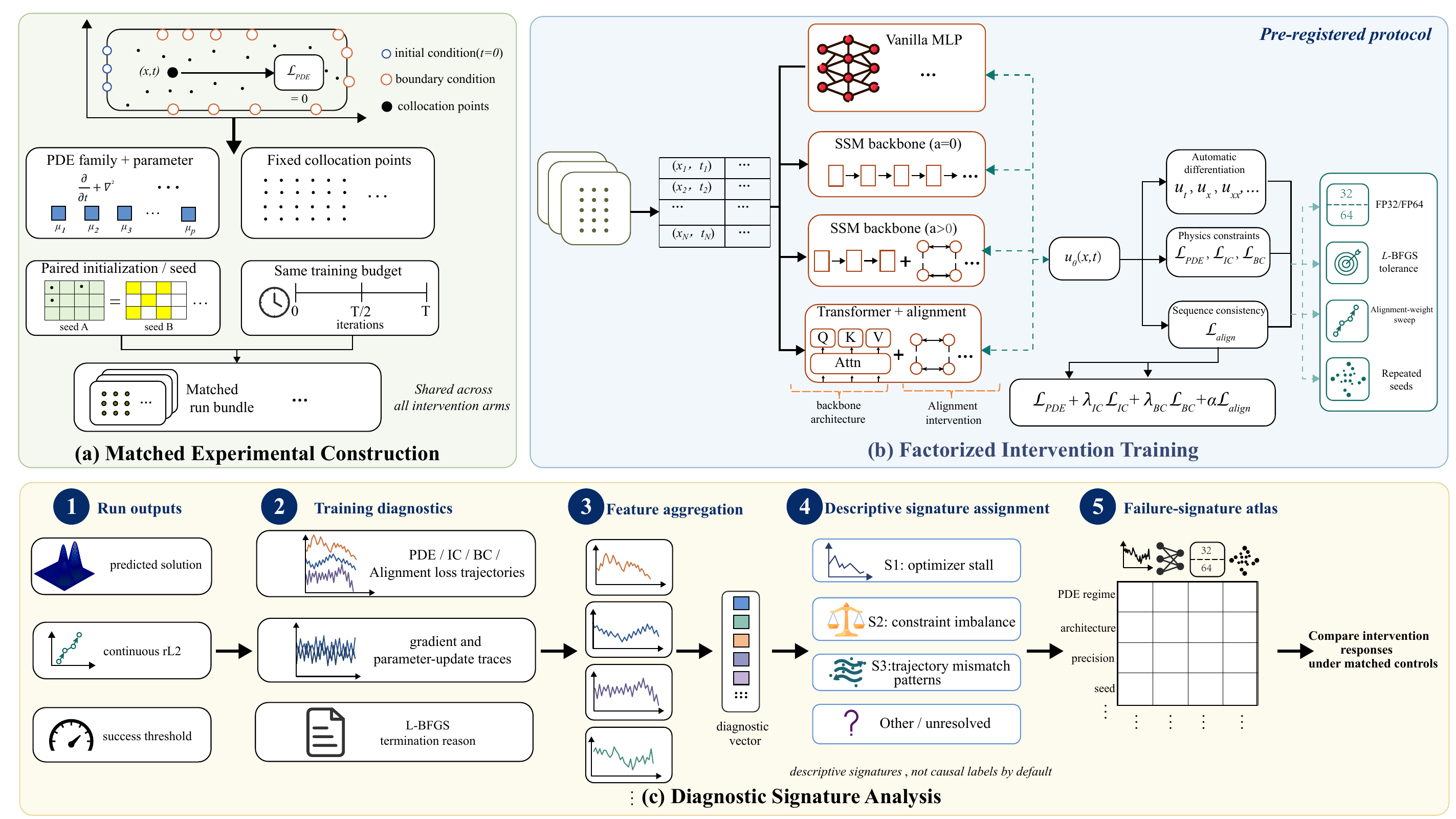}
\caption{Overview of the controlled study workflow.}
\label{fig:framework}
\end{figure*}

\subsection{Problem Setup}
\label{sec:setup}

We study three one-dimensional PDEs that are standard PINN stress tests~\cite{krishnapriyan2021characterizing}: the convection equation $u_t + \beta u_x = 0$ at $\beta{=}50$, whose transported profile develops steep, high-frequency gradients; the stiff reaction equation $u_t = \rho u(1-u)$ at $\rho{=}5$, with a rapidly saturating solution; and the wave equation $u_{tt}-4u_{xx}=0$ under a two-frequency initial displacement, whose multi-scale oscillations stress temporal coherence. All three are posed on a periodic spatial domain and trained against analytic reference solutions on a fine grid; the parameters are deliberately hard---each regime fails under a standard MLP trained with L-BFGS, the joint baseline both remedies are meant to rescue.

Both remedies target the same objective, so it is worth stating explicitly. Writing each PDE in operator form $M[u]=0$ on $\Omega\times[0,T]$ with boundary operator $B$ and initial operator $I$, the network $u_\theta$ minimizes the physics-informed objective
\begin{align}
\mathcal{L}(\theta) &= w_r\mathcal{L}_r(\theta)+w_b\mathcal{L}_b(\theta)+w_i\mathcal{L}_i(\theta), \label{eq:loss}\\
\mathcal{L}_r(\theta) &= \tfrac{1}{|\chi_r|}\textstyle\sum_{(x,t)\in\chi_r}\|M[u_\theta](x,t)\|^2, \nonumber
\end{align}
with $\mathcal{L}_b,\mathcal{L}_i$ the mean-squared boundary and initial residuals over their collocation sets. The three regimes are precisely where this objective is hard to minimize: the residual gradient dominates the data gradients in the sense of the pathology ratio~\cite{wang2021understanding},
\begin{equation}\label{eq:gradpath}
\frac{\|\nabla_\theta\mathcal{L}_r\|}{\|\nabla_\theta\mathcal{L}_b\|+\|\nabla_\theta\mathcal{L}_i\|}\gg 1,
\end{equation}
equivalently the Neural Tangent Kernel is ill-conditioned, $\lambda_{\max}(K_r)/\lambda_{\min}(K_{b,i})\gg1$~\cite{wang2022ntk}. Each remedy attempts to restore minimizability of~\eqref{eq:loss}, but along a different axis.

\subsection{Remedy Mechanisms}
\label{sec:mechanism}

The two propositions below make precise why these interventions are not interchangeable.

\textbf{Precision acts on the optimizer's stopping test.} L-BFGS terminates when the iterate change drops below a \texttt{tolerance\_change} threshold $\tau$~\cite{liu1989lbfgs}, and that threshold cannot be resolved below the roundoff at which the objective is evaluated.
\begin{proposition}[Precision-bounded stopping floor]\label{prop:floor}
Let $\mathcal{L}$ be evaluated in precision $p$ with unit roundoff $u_p$ (FP32: $u\approx5.96\!\times\!10^{-8}$, machine $\varepsilon\approx1.19\!\times\!10^{-7}$; FP64: $u\approx1.11\!\times\!10^{-16}$). The iterate difference $|\Delta\mathcal{L}_k|$ driving the stopping test satisfies the roundoff floor $|\Delta\mathcal{L}_k|\gtrsim u_p|\mathcal{L}_k|$, so the test resolves genuine progress only down to $\max(\tau,\,u_p|\mathcal{L}_k|)$.
\end{proposition}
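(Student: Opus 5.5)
The argument is a standard floating-point error analysis of the quantity the stopping test compares. I would work in the standard model of arithmetic, $\mathrm{fl}(a\circ b)=(a\circ b)(1+\delta)$ with $|\delta|\le u_p$. The computed loss is then $\widehat{\mathcal{L}}_k=\mathcal{L}_k(1+\eta_k)$, with $|\eta_k|\lesssim c\,u_p$. The constant $c$ depends on the summation length $|\chi_r|$ and on the depth of the network and autodiff graph. Since the proposition is stated with $\gtrsim$, I would absorb $c$ into the asymptotic notation. At minimum, rounding the final stored value alone contributes a relative perturbation of size $u_p$, which is the irreducible part of the floor.

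\textbf{Step 1: perturbation of the difference.} PyTorch-style L-BFGS compares $|\widehat{\mathcal{L}}_k-\widehat{\mathcal{L}}_{k-1}|$ against $\tau$. Write
\[
\widehat{\mathcal{L}}_k-\widehat{\mathcal{L}}_{k-1}=(\mathcal{L}_k-\mathcal{L}_{k-1})+(\eta_k\mathcal{L}_k-\eta_{k-1}\mathcal{L}_{k-1}).
\]
The second term is the roundoff contamination, of magnitude up to about $u_p|\mathcal{L}_k|$, since consecutive losses are comparable near convergence. There is a sharper point. Two representable numbers near $\mathcal{L}_k$ either coincide or differ by at least one unit in the last place, about $2u_p|\mathcal{L}_k|$. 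Hence any nonzero computed difference satisfies $|\Delta\mathcal{L}_k|\gtrsim u_p|\mathcal{L}_k|$. This is the claimed floor, read as a granularity statement: the computed difference lives on a lattice of spacing comparable to $u_p|\mathcal{L}_k|$.

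\textbf{Step 2: resolution conclusion.} Suppose the true decrease satisfies $|\mathcal{L}_k-\mathcal{L}_{k-1}|\ll u_p|\mathcal{L}_k|$. Then the computed difference is either $0$ or dominated by rounding noise, and in either case it carries no sign or magnitude information about genuine progress. The test therefore distinguishes real progress only when the true decrease exceeds both $\tau$ (by design) and $u_p|\mathcal{L}_k|$ (by Step 1). That gives the threshold $\max(\tau,u_p|\mathcal{L}_k|)$. Finally, I would instantiate the constants: for FP32, $u_p|\mathcal{L}_k|$ sits at the scale of the default $\tau\sim10^{-7}$ to $10^{-9}$ whenever $|\mathcal{L}_k|$ is of order one or larger, whereas FP64 pushes the floor about nine orders of magnitude lower.

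\textbf{Main obstacle.} The hard step is making Step 1 honest. The accumulated error constant $c$ from summing $|\chi_r|$ squared residuals, and from second derivatives through autodiff, can make the effective floor much larger than $u_p$. The statement's $\gtrsim$ hides this. I would present the lattice-spacing bound as the rigorous lower floor and remark that accumulation only raises it, which strengthens the conclusion. A second subtlety is that \texttt{tolerance\_change} also tests iterate or parameter changes; the same argument applies componentwise with $|\theta|$ in place of $|\mathcal{L}_k|$.
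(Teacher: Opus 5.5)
Your argument is correct, and its first half is the paper's proof. The paper writes $\mathrm{fl}(\mathcal{L}_k)=\mathcal{L}_k+e_k$ with $|e_k|\le u_p|\mathcal{L}_k|$ and splits the computed difference exactly as in your Step~1 display. It then bounds the contamination by $u_p(|\mathcal{L}_{k+1}|+|\mathcal{L}_k|)\approx 2u_p|\mathcal{L}_k|$, calls that bound the noise floor, and instantiates with $\tau\approx1.19\times10^{-7}$ and $|\mathcal{L}_k|=O(1)$--$O(10^{2})$.

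What you add is the ulp-spacing step, and it does real work. An upper bound on $|e_{k+1}-e_k|$ only shows that decreases below $2u_p|\mathcal{L}_k|$ \emph{can} be masked. It does not give the lower bound $|\mathrm{fl}(\Delta\mathcal{L}_k)|\gtrsim u_p|\mathcal{L}_k|$: the computed difference may be exactly $0$, and the error need not reach its bound. Your observation supplies the missing piece. Distinct floats near $\mathcal{L}_k$ differ by at least roughly $u_p|\mathcal{L}_k|$, and Sterbenz's lemma makes their computed difference exact. That makes the $\gtrsim$ literally true for nonzero differences and makes Step~2 rigorous: a true decrease far below one ulp yields a computed difference that is either $0$ or almost pure rounding.

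You also flag something the paper glosses over. $\mathcal{L}$ is a long sum through an autodiff graph, not a single rounded scalar, so its relative error carries a constant $c>1$. This only raises the masking level, so it is consistent with the ``only down to'' claim. The paper's version is shorter and has a clean factor of two; yours is the one whose inequality actually holds.

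One caveat on your closing aside about the parameter-change test. PyTorch's test checks the computed step $\max_i|t\,d_i|$ directly, which is a product with no lattice floor. The analogous limitation there is absorption: an update below about $u_p|\theta_i|$ is lost in $\theta_i+t\,d_i$. So it is not ``the same argument componentwise,'' though this lies outside the proposition.
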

\noindent\emph{Intuition.} The optimizer cannot resolve progress below the loss rounding error: in FP32 this floor coincides with the default $\tau$, while in FP64 it lies nine orders of magnitude below it. The full proof (with the IEEE-754 error expansion) is in Appendix~A.1 of the supplement.

\textbf{Alignment acts on the hypothesis class.} The second remedy adds the sub-sequence contrastive term of \citet{pinnmamba2025},
\begin{align}
\mathcal{L}_{\mathrm{align}}(\theta) &= \tfrac{1}{(k\!-\!1)|\chi|}\textstyle\sum_{(x,t)\in\chi}\sum_{j=1}^{k-1}\Delta_j(x,t;\theta), \label{eq:align}\\
\Delta_j &= \big\|u_\theta^{(s_0)}(x,t{+}j\Delta t)-u_\theta^{(s_j)}(x,t{+}j\Delta t)\big\|^2, \nonumber
\end{align}
where $u_\theta^{(s)}$ is the prediction issued by sub-sequence $s$, so the trained objective is $\mathcal{L}+\alpha\mathcal{L}_{\mathrm{align}}$. It is one instance of the temporal-conditioning family. Causal residual weighting~\cite{wang2022causality} replaces $\mathcal{L}_r$ by a causally weighted form,
\begin{gather}
\mathcal{L}_c = \tfrac{1}{|\chi|}\textstyle\sum_{(x,t)\in\chi} w_c(x,t)\|M[u_\theta](x,t)\|^2,\\
w_c(x,t) \propto \exp\!\big(\textstyle-\varepsilon\!\sum_{t'\le t}\|M[u_\theta](x,t')\|^2\big), \nonumber
\end{gather}
which suppresses residual error at time $t$ until earlier times are fit; self-adaptive masks~\cite{mcclenny2023selfadaptive} instead use $\mathcal{L}=\sum_i s_i\|M[u_\theta](x_i,t_i)\|^2$ with co-trained weights $s_i\ge0$. All three modify the hypothesis class, not the stopping test.

\begin{proposition}[Alignment as over-determination]\label{prop:align}
Let $\mathcal{F}_{\chi^*}=\{u_\theta:M[u_\theta]|_{\chi^*}=0\}$ be the collocation-feasible set, which by Theorem~4.1 of \citet{pinnmamba2025} contains infinitely many spurious solutions. The agreement term~\eqref{eq:align} imposes $m=(k{-}1)|\chi|$ equality constraints $u_\theta^{(s_0)}=u_\theta^{(s_j)}$ at shared collocation points; whenever these are independent on the hypothesis class, the feasible set shrinks to $\mathcal{F}_{\chi^*}\cap\{\text{agreements}\}$ of dimension $\dim\mathcal{F}_{\chi^*}-m$, removing the bump-type spurious solutions.
\end{proposition}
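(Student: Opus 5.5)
The plan is to recast Proposition~\ref{prop:align} as a statement about the zero set of a smooth map on parameter space. Treat the collocation-feasible set $\mathcal{F}_{\chi^*}$ as the image under $\theta\mapsto u_\theta$ of the zero set $Z_r=\{\theta: R(\theta)=0\}$, where $R$ stacks the residuals $M[u_\theta](x,t)$ over $\chi^*$. Near a regular point, $Z_r$ is a smooth manifold whose dimension equals $\dim\mathcal{F}_{\chi^*}$, and we read the dimension count in the parameter space of the hypothesis class. Next, define the agreement map $A:\theta\mapsto\big(u_\theta^{(s_0)}(x,t{+}j\Delta t)-u_\theta^{(s_j)}(x,t{+}j\Delta t)\big)_{(x,t)\in\chi,\,1\le j\le k-1}\in\mathbb{R}^m$, with $m=(k{-}1)|\chi|$. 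Since each $\Delta_j\ge 0$, the term~\eqref{eq:align} vanishes exactly on $A^{-1}(0)$. Consequently, the zero-loss set of $\mathcal{L}+\alpha\mathcal{L}_{\mathrm{align}}$ is $Z_r\cap A^{-1}(0)$ for every $\alpha>0$.

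The core step is a transversality and implicit-function argument. I would interpret ``independent on the hypothesis class'' to mean that the differential $dA|_{T_\theta Z_r}$ has full rank $m$ at the relevant points; equivalently, the stacked Jacobian $\big[\nabla R;\nabla A\big]$ has rank equal to $\operatorname{rank}\nabla R+m$. The regular value theorem applied to $A$ restricted to the manifold $Z_r$ then shows that $Z_r\cap A^{-1}(0)$ is a submanifold of codimension $m$ in $Z_r$. This gives the claimed dimension $\dim\mathcal{F}_{\chi^*}-m$. If the intersection is empty, or the rank condition fails, the statement is vacuous or only an upper bound, and I would note this explicitly.

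The final step is to show that the bump-type spurious solutions are removed. Here I would invoke the construction behind Theorem~4.1 of \citet{pinnmamba2025}. A spurious solution there is $u^\ast+b$, where $b$ is supported in gaps between collocation points, so $b$ vanishes on $\chi^*$ but its residual does not vanish off $\chi^*$. Under the SSM parameterisation, sub-sequence $s_j$ reaches the point $t{+}j\Delta t$ through a different context window than $s_0$, so a bump inserted into one sub-sequence's prediction is not reproduced in the other's. The bump direction $\dot\theta_b$ therefore satisfies $dA\,\dot\theta_b\neq0$, which means it is not tangent to the intersected set. This last step is the main obstacle. ``Bump-type'' is only informally defined, and whether bumps are detected by agreement constraints depends on the architecture rather than on the count $m$. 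I would first try to make it rigorous for the linear-in-parameters case, where $u_\theta^{(s)}=\Phi_s\theta$. In that case independence reduces to a rank condition on $\Phi_{s_0}-\Phi_{s_j}$ restricted to $\ker\nabla R$. Bumps lie in that kernel and are not in $\ker(\Phi_{s_0}-\Phi_{s_j})$ by construction, and I would then extend the result locally via linearisation.
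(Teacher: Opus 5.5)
Your part (a) is sound, and it is sharper than the paper's version. The paper applies the regular level-set theorem constraint by constraint, assuming only ``independence on the hypothesis class''. The count $\dim\mathcal{F}_{\chi^*}-m$ actually needs your transversality condition: $dA$ restricted to $T_\theta Z_r$ must have rank $m$. Independence on the whole class does not stop a combination of agreement constraints from being implied by the residual constraints.

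The genuine gap is in part (b), at the claim that bump directions satisfy $dA\,\dot\theta_b\neq0$ ``by construction''.
\begin{itemize}
\item By your own description, the bump $b$ vanishes on $\chi^*$. The shared points $(x,t+j\Delta t)$ are themselves collocation points on which residuals are imposed; the paper's proof calls them shared collocation points.
\item So each entry of $dA\,\dot\theta_b=(\Phi_{s_0}-\Phi_{s_j})\dot\theta_b$ is a difference of two perturbation values at a point where both vanish. Hence $dA\,\dot\theta_b=0$: the bump direction is tangent to $Z_r\cap A^{-1}(0)$, the opposite of what you need.
\item This holds whether or not the bump is ``reproduced'' in the other window. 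Pointwise agreement at collocation points is blind to anything supported in the gaps.
\item Moving the shared points off $\chi^*$ does not save the argument. A window-independent perturbation, meaning the same function rendered by every sub-sequence, satisfies all $m$ agreements identically. Any class containing context-free predictors can realize one.
\item Part (a) cannot rescue this either. Lowering dimension by $m$ never excludes a particular hypothesis, since a codimension-$m$ submanifold can still contain every spurious point.
\end{itemize}

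What is missing is a premise that the spurious hypothesis is rendered inconsistently by different sub-sequences at some shared point. Its values must differ there, not only in the gaps.

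The paper's proof of (b) does not linearize at all; it supplies exactly this premise as a modeling assertion. The propagating solution $\bar u$ is a fixed point of forward prediction, so every window renders it identically. A spurious solution that fails to propagate the initial condition is said to ``drift'' across windows, giving $\Delta_j>0$ for some $j$. That drift is asserted, not derived from the bump construction.

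To close your argument, state it as an explicit hypothesis, or impose a structural restriction on the hypothesis class that rules out window-consistent spurious hypotheses. In your linear-in-parameters model this means assuming $(\Phi_{s_0}-\Phi_{s_j})\theta_{\mathrm{sp}}\neq0$ at some shared point for the spurious parameter $\theta_{\mathrm{sp}}$. Exclusion is then immediate because $A$ is linear. The conclusion ``removing bump-type spurious solutions'' then holds only for spurious hypotheses of that kind, not for gap-supported bumps.
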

\noindent\emph{Intuition.} Each cross-subsequence agreement is an independent level-set that lowers the feasible manifold's dimension by one; a bump-type spurious solution violates at least one such agreement and is removed (Theorem~4.1 of \citet{pinnmamba2025}). The full proof (bump-function construction) is in Appendix~A.2 of the supplement.

\noindent Propositions~\ref{prop:floor}--\ref{prop:align} formalize our central claim: precision acts on the optimizer's stopping axis and alignment on the hypothesis-class axis, so the two remedies lie on orthogonal axes of the loss landscape~\cite{rathore2024challenges} and neither subsumes the other.

\subsection{Experimental Protocol}
\label{sec:protocol}

The local controlled study pools 144 runs from preliminary experiments, the registered evaluation, tolerance ablations, protocol comparisons, and matched follow-ups. The independent validation study contributes 85 runs and 55.7 run-hours. Because neither study forms a complete factorial block, Table~\ref{tab:coverage} reports the effective sample size for each research question (RQ) instead of merging overlapping slices into a single total.

\begin{table}[t]
\centering
\caption{Evidence coverage. Local controlled-study slices overlap; independent-study slices partition the 85 runs, with one non-comparable slice excluded from validation claims. Asterisked proxy counts have unequal observability.}
\label{tab:coverage}
\begin{tabular}{p{0.60\columnwidth}cc}
\toprule
\textbf{Slice} & \textbf{RQ} & $\boldsymbol{n}$ \\
\midrule
Local study: five-seed conv/react slice & 1--2,4 & 60 \\
Local study: instrumented conv/react subset & 5 & $87^*$ \\
Local study: model-protocol comparison & 2 & 36 \\
Independent study: convection $\alpha$ sweep & 2 & 25 \\
Independent study: convection tolerance sweep & 3 & 30 \\
Independent study: wave MLP precision slice & 4 & 20 \\
Independent study: excluded wave-PINNMamba slice & -- & 10 \\
\bottomrule
\end{tabular}
\par\vspace{2pt}\raggedright\footnotesize $^*$S1 is observable for 46/87 records (39/63 failures); S3b plateau data for 18/87 (13/63). The registered S2 peak-amplitude condition is absent from the archive.
\end{table}

Within each comparison we control PDE (convection $\beta{=}50$, reaction $\rho{=}5$, wave with a two-frequency initial displacement), architecture (vanilla MLP; SSM without alignment, $\alpha{=}0$; SSM with alignment; PINNsFormer in a smaller slice), precision (FP32, FP64), seeds (five per main cell; three or five-to-ten in smaller/independent slices), and stopping/alignment (\texttt{tolerance\_change}$\in\{10^{-7},10^{-9},10^{-11}\}$ on the convection MLP sweep; $\alpha\in\{0,100,300,1000,3000\}$ on the SSM sweep). Runs in a comparison share the same PDE instance, collocation grid, training budget, and L-BFGS strong-Wolfe configuration. The registered evaluation uses 5{,}000 Adam steps then up to 1{,}000 outer L-BFGS steps on a $101\times101$ collocation grid with a $501\times501$ evaluation grid and disabled TF32; the independent study uses a repeated-step runner (1{,}000 L-BFGS calls, \texttt{max\_iter}$=20$) on an NVIDIA RTX 4090D. Accuracy is the relative $\ell_2$ error $\mathrm{rL2}=\|u_\theta-u_{\mathrm{ref}}\|_2/\|u_{\mathrm{ref}}\|_2$ over the evaluation grid, with success at unrounded rL2 $<0.05$; each cell reuses the same seeds across its variants so differences reflect the intervention.

\subsection{Pre-registration and Provenance}
\label{sec:prereg}

To separate confirmatory from exploratory evidence, we registered the analysis plan on May 28, 2026 after a set of preliminary experiments and before expanding the study. The registered plan fixed the success threshold (rL2 $<0.05$), the diagnostic signature definitions (S1 optimizer stall; S2 constraint-mismatch collapse; S3a--c trajectory conditions), and the rule that demotes the wave PDE when too few cells admit a clean signature. The supplementary archive contains the registered document, its timestamp, and a provenance table marking each run as preliminary, registered, follow-up, or independent; the independent validation study and the local follow-ups were run after registration.

\subsection{Archived Diagnostic Proxies}
\label{sec:proxies}

The registered classifier tags each failure as optimizer stall (S1), constraint mismatch (S2), or a trajectory condition (S3a--c). The archived implementation covers these rules in part (two registered tests absent; 41/87 records without L-BFGS logs), so the proxies enter as descriptive summaries, with unavailable evidence read as ``not fired''; the complete rules and coverage are deferred to the supplement.

\section{Experiments}
\label{sec:experiments}

Five research questions organize the experiments:
\begin{itemize}
    \item \textbf{RQ1:} Are precision and architecture/alignment interchangeable fixes under matched controls?
    \item \textbf{RQ2:} Which part of the architecture-based remedy matters: alignment weight or the recorded model protocol?
    \item \textbf{RQ3:} How do precision and inner L-BFGS tolerance affect the repeated-step runner?
    \item \textbf{RQ4:} How stable are intervention responses across seeds and evaluation choices?
    \item \textbf{RQ5:} Which archived diagnostic proxies appear in the instrumented convection/reaction subset?
\end{itemize}

\begin{figure*}[t]
\centering
\includegraphics[width=0.86\textwidth]{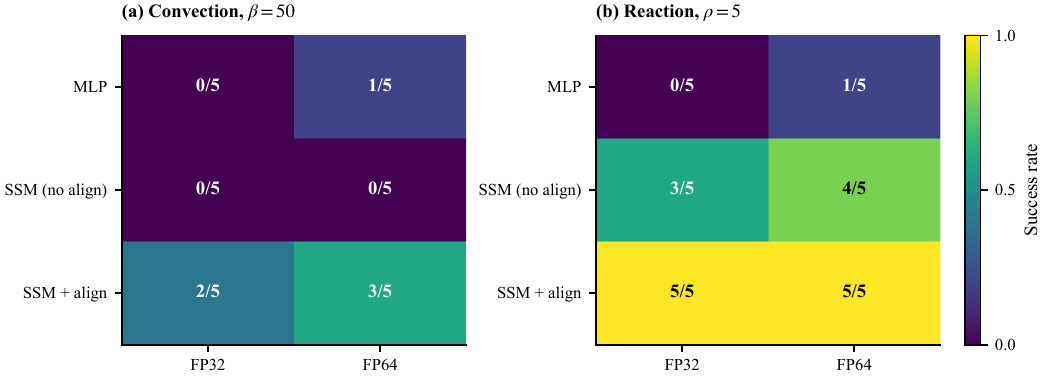}
\caption{Local controlled-study five-seed slice (success defined as rL2 $<0.05$), pooling registered and matched follow-up runs; FP64 no-alignment seeds 3--4 are drawn from the follow-ups. Precision and architecture/alignment produce regime- and seed-specific gains, with no uniform monotone improvement.}
\label{fig:heatmap}
\end{figure*}

\subsection{Cross-Factor Comparison (RQ1)}

Precision and alignment act on different PDE and seed slices, and neither substitutes for the other. Figure~\ref{fig:heatmap} shows the local controlled-study response patterns, and Table~\ref{tab:atlas} reports per-seed rL2 for the main convection/reaction slice at the registered outcome threshold. On hard convection, FP64 lifts vanilla-MLP success from 0/5 to 1/5 and leaves the unaligned SSM at 0/5, whereas alignment recovers 2/5 seeds in FP32 and 3/5 in FP64. On reaction, the SSM backbone already reaches 3/5--4/5 and alignment reaches 5/5 at both precisions, so the two remedies divide the regimes between them. The wave, transformer, tolerance, and alignment-weight follow-ups answer the remaining RQs.

\begin{table*}[t]
\centering
\caption{Per-seed rL2 in the local controlled-study slice (five seeds per cell; bold is rL2 $<0.05$ on unrounded values).}
\label{tab:atlas}
\begin{tabular}{llcccccc}
\toprule
\textbf{PDE} & \textbf{Regime} & \textbf{s=0} & \textbf{s=1} & \textbf{s=2} & \textbf{s=3} & \textbf{s=4} & \textbf{Rate} \\
\midrule
\multirow{6}{*}{Conv-$\beta$50}
& mlp FP32 & 1.005 & 0.896 & 0.806 & 0.914 & 0.800 & 0/5 \\
& mlp FP64 & 0.080 & 0.738 & 0.077 & \textbf{0.045} & 0.052 & 1/5 \\
& ssm\_noalign FP32 & 2.027 & 1.361 & 1.188 & 1.327 & 1.372 & 0/5 \\
& ssm\_noalign FP64 & 1.312 & 1.400 & 1.298 & 1.238 & 1.326 & 0/5 \\
& ssm\_align FP32 & \textbf{0.011} & \textbf{0.0498} & 0.549 & 0.386 & 0.904 & 2/5 \\
& ssm\_align FP64 & \textbf{0.006} & 1.287 & \textbf{0.005} & 0.168 & \textbf{0.026} & 3/5 \\
\midrule
\multirow{6}{*}{React-$\rho$5}
& mlp FP32 & 0.979 & 0.979 & 0.079 & 0.980 & 0.974 & 0/5 \\
& mlp FP64 & 0.055 & 0.065 & 0.059 & 0.064 & \textbf{0.042} & 1/5 \\
& ssm\_noalign FP32 & \textbf{0.026} & 0.106 & \textbf{0.030} & \textbf{0.031} & 0.682 & 3/5 \\
& ssm\_noalign FP64 & \textbf{0.034} & 0.643 & \textbf{0.035} & \textbf{0.029} & \textbf{0.023} & 4/5 \\
& ssm\_align FP32 & \textbf{0.019} & \textbf{0.028} & \textbf{0.010} & \textbf{0.022} & \textbf{0.031} & 5/5 \\
& ssm\_align FP64 & \textbf{0.028} & \textbf{0.031} & \textbf{0.026} & \textbf{0.022} & \textbf{0.023} & 5/5 \\
\bottomrule
\end{tabular}
\end{table*}

\subsection{Alignment Contribution and Weight Response (RQ2)}

The hard-convection SSM recoveries require alignment, and the response depends on both the alignment weight and the PDE. Table~\ref{tab:noalign} isolates the backbone's contribution on convection-$\beta{=}50$:

\begin{center}
\begin{minipage}{0.85\columnwidth}
\centering
\begin{tabular}{lcc}
\toprule
& FP32 & FP64 \\
\midrule
ssm\_noalign & 0/5 (all $>$1.0) & 0/5 (all $>$1.0) \\
ssm\_align & 2/5 & 3/5 \\
\bottomrule
\end{tabular}
\captionof{table}{Hard-convection success by alignment and precision (five seeds per cell).}
\label{tab:noalign}
\end{minipage}
\end{center}

Without alignment, the SSM backbone yields rL2 above 1.0 in every reported cell, with no improvement over the vanilla MLP. Under this protocol, the SSM-based recoveries are therefore attributable to the alignment objective rather than to the backbone alone. On reaction-$\rho{=}5$, the backbone alone achieves 3/5 (FP32) and 4/5 (FP64), while alignment lifts both precisions to 5/5; alignment thus has a larger effect on convection than on reaction.

\paragraph{Weight response.} The independent sweep reuses the same five seeds at $\alpha\in\{0,100,300,1000,3000\}$ on convection PINNMamba FP32. Success counts are 0/5, 0/5, 0/5, 2/5, and 3/5, and median rL2 falls from 0.972 at $\alpha{=}300$ to 0.801 at $\alpha{=}1000$ and 0.041 at $\alpha{=}3000$. Figure~\ref{fig:alpha-response} shows a threshold-like response rather than a sharp phase transition; at five seeds on a coarse grid, the trend is clear but the threshold is not precisely calibrated.

\paragraph{Recorded model protocols.} The local controlled study also compares PINNMamba with PINNsFormer. Table~\ref{tab:protocol} reports the convection-$\beta{=}50$ success counts.

\begin{center}
\begin{minipage}{0.85\columnwidth}
\centering
\begin{tabular}{lcc}
\toprule
Convection-$\beta{=}50$ & FP32 & FP64 \\
\midrule
PINNMamba & 2/5 & 3/5 \\
PINNsFormer & 0/5 & 0/3 \\
\bottomrule
\end{tabular}
\captionof{table}{PINNMamba vs.\ PINNsFormer success on convection-$\beta{=}50$ (unequal seeds; protocol-level contrast).}
\label{tab:protocol}
\end{minipage}
\end{center}

PINNsFormer records no successful convection seed (all rL2 $>0.97$); on reaction, both protocols solve every cell. The PINNsFormer manifest leaves the alignment field blank and the methods differ beyond backbone class, so the contrast here is protocol-level; representative trajectories are in the supplement.

\begin{center}
\begin{minipage}{0.95\columnwidth}
\centering
\includegraphics[width=\linewidth]{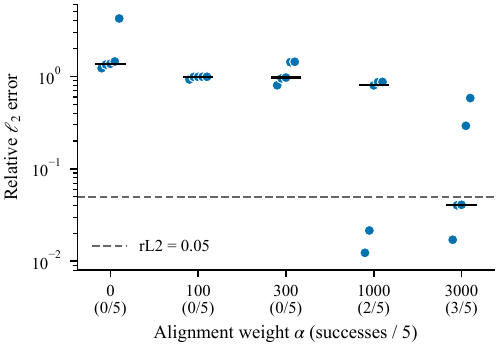}
\captionof{figure}{Independent convection PINNMamba FP32 sweep (the same five seeds per $\alpha$). Points are seeds, black segments are medians, and tick parentheses give successes/5. The dashed line is rL2 $=0.05$; points below it are successes.}
\label{fig:alpha-response}
\end{minipage}
\end{center}

\subsection{Tolerance Response in the Repeated-Step Runner (RQ3)}

Tighter inner tolerance lowers error and raises cost without changing success counts in the repeated-step implementation. The independent validation study issues 1{,}000 optimizer calls, each capped at 20 inner L-BFGS iterations, and sweeps \texttt{tolerance\_change} across three orders of magnitude on convection MLP. Because this protocol differs from a single-call L-BFGS run, it probes the tolerance response of a repeated-step solver rather than reproducing the single-call premature-termination mechanism. Table~\ref{tab:tolerance} reports success count and median rL2 per cell.

\begin{center}
\begin{minipage}{0.95\columnwidth}
\centering
\begin{tabular}{lccc}
\toprule
Precision & tol=1e-7 & tol=1e-9 & tol=1e-11 \\
\midrule
FP32 & 1/5; 1.006 & 1/5; 0.720 & 1/5; 0.720 \\
FP64 & 1/5; 1.005 & 1/5; 0.314 & 1/5; 0.314 \\
\bottomrule
\end{tabular}
\captionof{table}{Repeated-step convection MLP sweep: success count and median rL2 by precision and \texttt{tolerance\_change}.}
\label{tab:tolerance}
\end{minipage}
\end{center}

Tightening tolerance leaves the success count at 1/5 for both precisions while lowering the median error in each, more markedly in FP64. Median runtime rises from 120.8 to 2{,}573.3--2{,}569.6 seconds in FP64 and from 17.0 to 278.1--275.7 seconds in FP32, so the error gain---which concentrates at the step from $10^{-7}$ to $10^{-9}$ and then saturates---comes at a sharply negative marginal return per second. Figure~\ref{fig:tolerance-response} shows the seed distributions. Inner-iteration counts and termination reasons are absent from the logs, so these numbers describe a precision-dependent tolerance response of the repeated-step runner, not the optimizer's internal interaction or mechanism.

\begin{center}
\begin{minipage}{0.95\columnwidth}
\centering
\includegraphics[width=\linewidth]{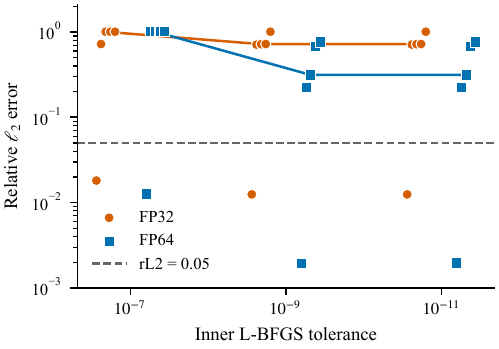}
\captionof{figure}{Independent repeated-step convection MLP sweep (the same five seeds per precision/tolerance cell). Circles are FP32, squares are FP64, and connected markers are medians. The dashed line is rL2 $=0.05$; points below it are successes.}
\label{fig:tolerance-response}
\end{minipage}
\end{center}

\subsection{Seed and Evaluation Robustness (RQ4)}

Intervention responses vary sharply across seeds. In the local controlled-study convection SSM+alignment slice, FP64 moves seed~1 from 0.0498 to 1.287 but seed~2 from 0.549 to 0.005---opposite directions under the same intervention. Reaction SSM+alignment succeeds in all ten precision--seed cells, whereas the no-alignment reaction arm still fails one FP64 cell. The independent ten-seed wave-MLP slice yields 4/10 successes in FP32 (median rL2 0.125) and 6/10 in FP64 (median 0.045). Among the paired seeds, three succeed only in FP32, five only in FP64, one in both, and one in neither; exact McNemar $p{=}0.727$, and a paired Wilcoxon test on log-rL2 gives $p{=}0.375$. FP64 therefore lowers the median error, though the success-rate gain is not statistically significant. The original three-seed wave-architecture result is exploratory, and the non-comparable independent wave-PINNMamba runs sit outside this analysis.

Table~\ref{tab:threshold} sweeps the success threshold $\pm$40\% (0.03--0.07) on the 60-run local convection/reaction slice.

\begin{center}
\begin{minipage}{0.95\columnwidth}
\centering
\begin{tabular}{cccccc}
\toprule
Threshold & 0.03 & 0.04 & \textbf{0.05} & 0.06 & 0.07 \\
\midrule
Success rate & 25\% & 35\% & \textbf{40\%} & 45\% & 48\% \\
\bottomrule
\end{tabular}
\captionof{table}{Success-rate sensitivity to the rL2 threshold on the 60-run local convection/reaction slice.}
\label{tab:threshold}
\end{minipage}
\end{center}

The aggregate regime ordering is stable across this range, although individual near-threshold seeds change status; the smooth rise in the success rate as the cutoff loosens confirms that the rankings are not artifacts of the 0.05 boundary.

\subsection{Diagnostic Failure Signatures (RQ5)}

Among the 63 failures in the 59-convection/28-reaction subset, the dominant proxy class is trajectory-related, consistent with the late-time and multi-scale distortions seen on convection and wave; the unresolved share reflects the missing L-BFGS logs noted in Archived Diagnostic Proxies. The exclusive counts (stall / constraint / trajectory / mixed / unresolved) and the bar chart are in the supplement.

\FloatBarrier

\section{Discussion}
\label{sec:discussion}

\paragraph{Revisiting the two diagnoses.} Both original diagnoses survive matched controls, each within a narrower scope than first claimed. The precision diagnosis is right that arithmetic bounds the L-BFGS stopping test---FP64 lowers error in every controlled cell---yet precision alone rarely flips a hard-convection failure into a success (1/5), which makes it a complement to, rather than a substitute for, the architectural remedy. The architecture diagnosis is right that the SSM with alignment recovers hard convection, but our ablation credits the alignment objective rather than the Mamba backbone, since the unaligned SSM matches or underperforms the vanilla MLP. Each remedy holds where it was reported, and each covers a different slice of the regimes and seeds we test.

\paragraph{Why seed-level reporting matters.} On hard convection with alignment, FP64 turns seed~2 from failure into success but seed~1 from success into failure, so the cell-level rate (3/5) hides opposite per-seed movements; reversals like these are characterized through their distribution, not confirmed by another seed---the case for per-seed reporting.

\paragraph{Implications.} Neither ``just use FP64'' nor ``just change the backbone'' qualifies as general advice: practitioners should compare precision and stopping jointly, tune alignment as an intervention, and report per-seed error distributions (for lower error, FP64 at moderate tolerance is the efficient point, since tightening past $10^{-9}$ multiplies runtime without further gain). Single-configuration conclusions shift when seed, precision, stopping, and alignment vary jointly; causal interpretation of the proxies awaits counterfactual continuations.

\paragraph{Limitations.} The corpus covers three one-dimensional PDE families at selected parameter values, not a broad scientific-computing benchmark. The PINNMamba--PINNsFormer comparison is a method-protocol comparison with unequal seeds, a blank PINNsFormer alignment field, no parameter/tuning match, and no identical initialization tensors across architectures. The independent validation uses a repeated-step runner, and its wave-PINNMamba coefficient differs from the public wave protocol. The archived proxy classifier omits two registered conditions, 41/87 records lack L-BFGS logs, S3 uses final rL2, and the causal taxonomy still awaits validation by rescue experiments.

\section{Reproducibility}

All random seeds are reported, and rL2 $<0.05$ is applied to unrounded values. Artifacts include the run manifest, registered analysis file, archived proxy JSON, independent-validation CSV and implementation, and the scripts that regenerate Figures~\ref{fig:heatmap}--\ref{fig:tolerance-response}; provenance separates preliminary, registered, follow-up, and independent evidence, and the pinned upstream code and environment accompany the submission.

\section{Conclusion}

Under matched controls, precision and tolerance act on different PDE and seed slices than architecture and alignment, so the two classes of remedy are not interchangeable. The alignment recoveries, tolerance costs, and seed reversals we observe make a practical case: evaluate precision, stopping, backbone, and alignment jointly, and report every outcome per seed.

\bibliographystyle{plainnat}
\bibliography{refs}

% ============================ Appendix ================================
% Supplementary material of the conference version (full proofs),
% merged for arXiv.
\onecolumn
This supplement provides the full proofs of the two propositions stated in Section~3 (Remedy Mechanisms) of the main paper. We write $u_p$ for the unit roundoff of precision $p$ (FP32: $u\approx 5.96\times10^{-8}$, machine $\varepsilon\approx1.19\times10^{-7}$; FP64: $u\approx1.11\times10^{-16}$), $\mathrm{fl}(\cdot)$ for floating-point evaluation, $\tau$ for the L-BFGS \texttt{tolerance\_change} threshold, and $\mathcal{L}$ for the physics-informed objective of Eq.~(1) of the main text.

\section*{Appendix A. Proofs}

\subsection*{A.1 Proof of Proposition 1 (precision-bounded stopping floor)}

We restate the proposition and prove it from the IEEE-754 floating-point model.

\begin{proposition*}[Precision-bounded stopping floor]
Let $\mathcal{L}$ be evaluated in precision $p$ with unit roundoff $u_p$. The iterate difference $|\Delta\mathcal{L}_k|=|\mathcal{L}_{k+1}-\mathcal{L}_k|$ that drives the L-BFGS \texttt{tolerance\_change} test satisfies the roundoff floor $|\mathrm{fl}(\Delta\mathcal{L}_k)|\gtrsim u_p|\mathcal{L}_k|$, so the test resolves genuine progress only down to $\max(\tau,\,u_p|\mathcal{L}_k|)$.
\end{proposition*}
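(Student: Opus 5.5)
The plan is to work in the standard IEEE-754 model $\mathrm{fl}(a\circ b)=(a\circ b)(1+\delta)$, $|\delta|\le u_p$. The goal is to show that any change in the computed loss smaller than the rounding error of the loss itself is not a measurement of genuine progress. The first step is to model the computed objective. Since $\mathcal{L}$ is a sum of squared residuals, I will write $\widehat{\mathcal{L}}_k=\mathrm{fl}(\mathcal{L}(\theta_k))=\mathcal{L}_k(1+\eta_k)$. By the usual summation and product error bounds, $|\eta_k|\le \gamma_{n}\approx n u_p$, where $n$ is the depth of the evaluation. The one unavoidable part is the final rounding of the stored scalar, which already contributes a relative perturbation of size up to $u_p$. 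Hence the computed loss is determined only up to an absolute uncertainty of order $u_p|\mathcal{L}_k|$. The second consequence is that representable values near $\mathcal{L}_k$ are spaced by roughly $2u_p|\mathcal{L}_k|$.

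The second step turns this granularity into a floor on the measured difference. Write $\mathrm{fl}(\Delta\mathcal{L}_k)=\mathrm{fl}(\widehat{\mathcal{L}}_{k+1}-\widehat{\mathcal{L}}_k)$. By Sterbenz's lemma, this subtraction is exact when the two values are close, so all the error comes from $\eta_k$ and $\eta_{k+1}$. Two cases follow. If the two computed values coincide, the measured difference is $0$. Otherwise they differ by at least one ulp, so $|\mathrm{fl}(\Delta\mathcal{L}_k)|\gtrsim u_p|\mathcal{L}_k|$; this is the stated floor, read as ``nonzero measured differences cannot be smaller than this.'' In both cases I obtain
\[
\mathrm{fl}(\Delta\mathcal{L}_k)=\Delta\mathcal{L}_k+\mathcal{L}_{k+1}\eta_{k+1}-\mathcal{L}_k\eta_k ,
\]
so the measured change differs from the true one by up to about $2\gamma_n|\mathcal{L}_k|$, which is of order $u_p|\mathcal{L}_k|$ (up to the constant hidden in $\gtrsim$).

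The third step combines this with the test itself. L-BFGS (in the PyTorch convention) stops when $|\mathrm{fl}(\Delta\mathcal{L}_k)|<\tau$. If $\tau>u_p|\mathcal{L}_k|$, then a true decrease of at least $\tau$, up to the rounding error, is faithfully recorded, so $\tau$ is the effective resolution. If instead $\tau\le u_p|\mathcal{L}_k|$, then any true decrease below $u_p|\mathcal{L}_k|$ is indistinguishable from noise. Such a decrease may be recorded as exactly zero, triggering the stop, or as a noise-driven difference, so it cannot be resolved. Taking the larger of the two thresholds gives the resolution $\max(\tau,u_p|\mathcal{L}_k|)$. Substituting the FP32 and FP64 values of $u_p$ then yields the remark that the FP32 floor sits near the default $\tau$ while the FP64 floor lies far below it.

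I expect the main obstacle to be making ``$\gtrsim$'' and ``resolves genuine progress'' precise. The literal inequality $|\mathrm{fl}(\Delta\mathcal{L}_k)|\gtrsim u_p|\mathcal{L}_k|$ fails whenever the computed difference is exactly zero. Moreover, the accumulated error constant $\gamma_n$ can exceed $u_p$ by a factor of $n$. I would handle both issues by stating the result as a dichotomy, either zero or at least one ulp. I would also state explicitly that only the final-rounding contribution is claimed as a lower bound, while the accumulated evaluation error only enlarges the floor; enlarging the floor strengthens the conclusion rather than contradicting it.
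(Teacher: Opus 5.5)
Your proposal is correct and reaches the paper's conclusion, but the key step for the displayed inequality is different. The paper uses only the one-sided rounding bound $|\mathrm{fl}(a)-a|\le u_p|a|$. It models $\mathrm{fl}(\mathcal{L}_k)$ as a single rounding of the exact loss and writes $\mathrm{fl}(\mathcal{L}_{k+1})-\mathrm{fl}(\mathcal{L}_k)=\Delta\mathcal{L}_k+(e_{k+1}-e_k)$. It then bounds the error term \emph{from above} by $u_p(|\mathcal{L}_{k+1}|+|\mathcal{L}_k|)\approx 2u_p|\mathcal{L}_k|$. It reads this as a noise level that masks any smaller genuine decrease, which gives the resolution $\max(\tau,2u_p|\mathcal{L}_k|)$. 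It never establishes the displayed lower bound $|\mathrm{fl}(\Delta\mathcal{L}_k)|\gtrsim u_p|\mathcal{L}_k|$, which is false when the two computed losses coincide, as you point out.

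Your route adds two ingredients the paper lacks. First, Sterbenz's lemma makes the subtraction exact, so all error sits in $\eta_k,\eta_{k+1}$. Second, the discreteness of the floating-point grid gives the dichotomy: the computed difference is either zero or at least one ulp of the smaller value, hence $>u_p\min(|\widehat{\mathcal{L}}_k|,|\widehat{\mathcal{L}}_{k+1}|)$. This is a precise, true version of the displayed inequality. It also yields a sharper corollary: once $\tau$ falls below one ulp, the test can fire only on exact stagnation of the computed loss. The paper's argument is shorter and directly supplies the masking scale, with its natural factor of two, that the ``resolves genuine progress'' clause needs. Your third step reuses that same masking reasoning.

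One caution about your first step. The bound $|\eta_k|\le\gamma_n$ from standard summation and product error analysis does not hold for these objectives. The residuals (e.g.\ $u_t+\beta u_x$) are formed with heavy cancellation near convergence, so the relative error of the computed loss is not controlled by $\gamma_n$. This is harmless only because you take the lower bound from grid spacing rather than from $\eta_k$, and, as you say, any larger evaluation error merely raises the floor.
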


\begin{proof}
By the IEEE-754 floating-point model (Higham 2002), every computed scalar satisfies $\mathrm{fl}(a)=a(1+\delta_a)$ with $|\delta_a|\le u_p$, equivalently $|\mathrm{fl}(a)-a|\le u_p|a|$. Writing $\mathrm{fl}(\mathcal{L}_k)=\mathcal{L}_k+e_k$ with $|e_k|\le u_p|\mathcal{L}_k|$, the computed iterate difference decomposes as
\begin{equation}
\mathrm{fl}(\mathcal{L}_{k+1})-\mathrm{fl}(\mathcal{L}_k)=(\mathcal{L}_{k+1}-\mathcal{L}_k)+(e_{k+1}-e_k),
\label{eq:roundoff-decomp}
\end{equation}
and the second term is bounded in magnitude by $u_p(|\mathcal{L}_{k+1}|+|\mathcal{L}_k|)\approx 2u_p|\mathcal{L}_k|$ near convergence, where $|\mathcal{L}_{k+1}|\approx|\mathcal{L}_k|$. This term is an irreducible noise floor: any genuine decrease $(\mathcal{L}_{k+1}-\mathcal{L}_k)$ smaller than $2u_p|\mathcal{L}_k|$ is masked by rounding error and indistinguishable from zero. The L-BFGS stopping criterion $|\mathrm{fl}(\mathcal{L}_{k+1})-\mathrm{fl}(\mathcal{L}_k)|<\tau$ (Liu \& Nocedal 1989) therefore resolves genuine progress only down to $\max(\tau,\,2u_p|\mathcal{L}_k|)$, i.e.\ $\max(\tau,\,u_p|\mathcal{L}_k|)$ up to the harmless factor of two.

Substituting the roundoffs settles the precision comparison. With the default \texttt{tolerance\_change} $\tau\approx1.19\times10^{-7}$ and the loss magnitudes $|\mathcal{L}_k|=O(1)$--$O(10^{2})$ typical of the residual-dominated PINN regimes we study, the FP32 floor $2u_{\mathrm{FP32}}|\mathcal{L}_k|\sim10^{-7}$--$10^{-5}$ is of the same order as or exceeds $\tau$, so the test sits at the noise floor and may terminate while reducible progress remains. In FP64 the floor drops to $2u_{\mathrm{FP64}}|\mathcal{L}_k|\sim10^{-15}$--$10^{-13}$, nine orders of magnitude below $\tau$, restoring the test's ability to detect progress of size $\tau$. Hence FP64 relaxes the stopping floor that FP32 imposes, without changing the model.
\end{proof}

\subsection*{A.2 Proof of Proposition 2 (alignment as over-determination)}

We build on Theorem~4.1 of Xu et al.\ (2025), which shows that for any finite collocation set $\chi^*\subset\Omega\times[0,T]$ there exist infinitely many functions $u_\theta$ (constructed as bump perturbations supported in small neighborhoods of $\chi^*$) with $M[u_\theta]|_{\chi^*}=0$ yet $M[u_\theta]\ne0$ almost everywhere on $\Omega\times[0,T]\setminus\chi^*$.

\begin{proposition*}[Alignment as over-determination]
Let $\mathcal{F}_{\chi^*}=\{u_\theta:M[u_\theta]|_{\chi^*}=0\}$ be the collocation-feasible set. The sub-sequence agreement term of Eq.~(3) in the main text imposes $m=(k{-}1)|\chi|$ equality constraints; whenever these are independent on the hypothesis class, the feasible set shrinks to $\mathcal{F}_{\chi^*}\cap\{\text{agreements}\}$ of dimension $\dim\mathcal{F}_{\chi^*}-m$, removing the bump-type spurious solutions of Xu et al.\ (2025).
\end{proposition*}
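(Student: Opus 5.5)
The argument is a dimension count on the parameter manifold, followed by a separate argument that alignment excludes the bump-type spurious solutions. I would work in parameter space $\Theta\subset\mathbb{R}^P$ rather than function space, since that is where "dimension" has a precise meaning.

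\textbf{Step 1 (feasible set as a manifold).} Define the residual map $R:\Theta\to\mathbb{R}^{|\chi^*|}$ by $R(\theta)=(M[u_\theta](x,t))_{(x,t)\in\chi^*}$. Then $\mathcal{F}_{\chi^*}=R^{-1}(0)$. Assume $0$ is a regular value of $R$; this is generic for smooth activations by a Sard/transversality argument. The regular value theorem then makes $\mathcal{F}_{\chi^*}$ a smooth submanifold of dimension $d=P-|\chi^*|$. Theorem~4.1 of \citet{pinnmamba2025} guarantees it is nonempty and contains the bump-type members.

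\textbf{Step 2 (adjoining the agreements).} Define the agreement map $A:\Theta\to\mathbb{R}^m$ with components
\[
A_{j,(x,t)}(\theta)=u_\theta^{(s_0)}(x,t+j\Delta t)-u_\theta^{(s_j)}(x,t+j\Delta t).
\]
The zero set of $\mathcal{L}_{\mathrm{align}}$ in Eq.~\eqref{eq:align} is exactly $A^{-1}(0)$, because $\mathcal{L}_{\mathrm{align}}$ is a sum of squares of these components. I would read "independent on the hypothesis class" as the condition that the stacked Jacobian $\begin{pmatrix}DR\\ DA\end{pmatrix}$ has full row rank $|\chi^*|+m$ along $\mathcal{F}_{\chi^*}\cap A^{-1}(0)$. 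Equivalently, $DA$ restricted to $T\mathcal{F}_{\chi^*}$ is surjective. The regular value theorem applied to $(R,A)$ then gives dimension $P-|\chi^*|-m=\dim\mathcal{F}_{\chi^*}-m$. A side remark: for the penalized objective $\mathcal{L}+\alpha\mathcal{L}_{\mathrm{align}}$, the zero-loss minimizers coincide with this intersection for every $\alpha>0$, so the constrained statement is the right one.

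\textbf{Step 3 (removing bumps).} This is the main obstacle. A dimension drop alone does not show that a specific function is excluded, so I would argue directly from the construction in Theorem~4.1. A bump spurious solution is $u^\ast+\sum_i c_i\phi_i$, where each $\phi_i$ is supported in a small neighborhood $U_i$ of a collocation point. The key observation is that sub-sequence $s_j$ predicts at the shifted time $t+j\Delta t$ from a context starting at a different anchor. For a sequence model, a bump that is invisible to the $s_0$ context near $(x,t+j\Delta t)$ but present in $s_j$'s context, or vice versa, produces a nonzero $A_{j,(x,t)}$.

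The first approach I would try is to choose the bump radius smaller than $\Delta t/2$. Then each $U_i$ meets the shared prediction point of only one of the two sub-sequences, so $A\neq0$ whenever some $c_i\ne0$ is ``seen'' asymmetrically. If that fails for architectures where both sub-sequences see identical inputs, the conclusion has to be weakened. In that case I would state that the independence hypothesis is precisely what excludes that degenerate case, and the removal claim becomes conditional on that hypothesis. This is the honest scope of the proposition.
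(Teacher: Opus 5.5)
\textbf{Steps 1--2.} These steps are sound. They are the paper's part (a) in sharper form: the paper also applies the regular level-set theorem to the $m$ agreement constraints. Your parameter-space reading of independence, as full row rank of the stacked Jacobian of $(R,A)$, is what makes the count $\dim\mathcal{F}_{\chi^*}-m$ meaningful; in function space the count would be $\infty-m$. Two small slips:
\begin{itemize}
\item Sard's theorem makes almost every value regular, not the specific value $0$. That $0$ is a regular value of $R$ must therefore be assumed, or obtained by a parametric transversality argument.
\item Zero-loss minimizers of $\mathcal{L}+\alpha\mathcal{L}_{\mathrm{align}}$ must also zero the boundary and initial terms. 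They form a subset of $\mathcal{F}_{\chi^*}\cap A^{-1}(0)$, not all of it.
\end{itemize}

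\textbf{The gap is Step 3, and neither branch works.} The component $A_{j,(x,t)}(\theta)$ compares two renderings of the \emph{same} input point $(x,t+j\Delta t)$, which belongs to both sub-sequences by construction. The two sub-sequences differ only in the context they carry.
\begin{itemize}
\item \emph{Radius branch.} A perturbation that depends only on $(x,t)$ is rendered identically by both sub-sequences, whatever its radius relative to $\Delta t$, so $A$ vanishes on it. Any disagreement must come from context-dependence of the model, and the bump construction says nothing about that. The condition ``meets the shared prediction point of only one of the two sub-sequences'' identifies no such mechanism.
\item \emph{Fallback branch.} It fails for the reason you give at the start of Step 3. Independence is a rank condition on $DA$ at points of $A^{-1}(0)$, so it cannot decide whether a given parameter lies in $A^{-1}(0)$. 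It does exclude architectures with $A\equiv0$ (then $DA\equiv0$). But suppose some $\theta_b$ in a context-dependent architecture renders a spurious collocation-feasible function consistently across sub-sequences. Then $A(\theta_b)=0$, and full rank at $\theta_b$ only makes the intersection a smooth manifold through $\theta_b$. The spurious solution survives.
\end{itemize}

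\textbf{The missing ingredient.} You need a premise linking cross-sub-sequence agreement to propagation of the initial condition, and this is how the paper's part (b) proceeds. The propagating reference $\bar u$ is a fixed point of each sub-sequence's forward prediction, so $\bar u^{(s_j)}=\bar u^{(s_0)}$. A spurious solution that fails to propagate the initial condition drifts across sub-sequences, giving $\Delta_j>0$ at some shared point. With $m$ large relative to the local degrees of freedom of the bump construction, all bump-type solutions are then excluded.

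This drift property is an assumption about the sequence architecture; the paper asserts it rather than derives it. It is not a consequence of independence. Your proof must adopt it, or an equivalent, as an explicit hypothesis for the removal clause.

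\textbf{A further issue in Step 1.} The claim that Theorem~4.1 places the bump-type members inside $R^{-1}(0)\subset\Theta$ is unsupported. That theorem builds its bumps in function space. With analytic activations, a network cannot realize a nonzero compactly supported perturbation of an analytic reference solution. In your parameter-space setting, the removal clause must first be recast before it can be proved, for instance as excluding parameters whose renderings approximate such functions.
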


\begin{proof}
We prove the two claims in turn.

\emph{(a) Dimension reduction.} The agreement term penalizes, at each shared collocation point $(x,t+j\Delta t)$, the disagreement between the predictions issued by the sub-sequences $s_0,\dots,s_j$ that all contain it. On its zero set this is the system of $m=(k{-}1)|\chi|$ equality constraints
\begin{equation}
u_\theta^{(s_0)}(x,t+j\Delta t)=u_\theta^{(s_j)}(x,t+j\Delta t),\qquad j=1,\dots,k-1,\ (x,t)\in\chi .
\label{eq:agreement-system}
\end{equation}
Assume these constraints are functionally independent on the hypothesis class---a generic condition, since each constrains a distinct pair of sub-sequence outputs at a distinct point. By the regular level-set theorem (a corollary of the implicit function theorem), each independent equality constraint lowers the dimension of the feasible manifold $\mathcal{F}_{\chi^*}$ by one. Hence the augmented feasible set $\mathcal{F}_{\chi^*}\cap\{\text{agreements}\}$ has dimension $\dim\mathcal{F}_{\chi^*}-m$.

\emph{(b) Removal of bump-type spurious solutions.} Let $\bar u$ denote the initial-condition-propagating reference solution. A bump-type spurious $u_\theta$ from Theorem~4.1 of Xu et al.\ (2025) agrees with $\bar u$ on $\chi^*$ (so $M[u_\theta]|_{\chi^*}=0$) but departs from $\bar u$ on $\Omega\times[0,T]\setminus\chi^*$. Now consider how a sub-sequence $s_j$, which is trained to predict the solution forward from time $t+j\Delta t$, renders a shared point $(x,t+j\Delta t)$. The propagating solution $\bar u$ is a fixed point of this forward prediction, so it is rendered identically by every sub-sequence that contains the point: $\bar u^{(s_j)}=\bar u^{(s_0)}$. A spurious $u_\theta$ that fails to propagate the initial condition instead drifts across sub-sequences, so $u_\theta^{(s_j)}\ne u_\theta^{(s_0)}$ at some shared point, giving $\Delta_j>0$ for at least one $j$. Such a $u_\theta$ therefore violates the agreement set and is excluded from $\mathcal{F}_{\chi^*}\cap\{\text{agreements}\}$. With $m$ large relative to the local degrees of freedom deposited near $\chi^*$ by the bump construction, all bump-type spurious solutions are excluded, leaving $\bar u$ as the dominant feasible hypothesis.
\end{proof}

\begin{remark}
The independence assumption in part~(a) is the only non-trivial hypothesis; it holds generically because the agreement constraints couple disjoint sub-sequence outputs at disjoint points. Part~(b) formalizes the design rationale of the sub-sequence contrastive loss: alignment enforces the cross-subsequence agreement that the propagating solution satisfies but collocation-only spurious solutions do not.
\end{remark}

\end{document}